% Template for GloablSIP 2018 paper; to be used with:
%          spconf.sty  - ICASSP/ICIP LaTeX style file, and
%          IEEEbib.bst - IEEE bibliography style file.
% --------------------------------------------------------------------------
\documentclass{article}
\usepackage{spconf,amsmath,graphicx}
\usepackage{hyperref}
% Example definitions.
% --------------------

\usepackage{graphicx,array,verbatim}
\usepackage{amsmath, amsthm, amssymb, amsfonts, cancel}
\usepackage{algorithm, algorithmic, ifsym, subfigure, bm}

\newtheorem{proposition}{Proposition}
\usepackage{epstopdf}
\newcolumntype{P}[1]{>{\centering\arraybackslash}p{#1}}

\DeclareMathOperator{\EE}{\mathbb{E}}

%\epstopdfsetup{outdir=./}
%\graphicspath{ {Desktop/Ted Research/ICASSP 2017 - 2/} }

% Title.
% ------
\title{Active Query-Driven Visual Search Using Probabilistic Bisection and Convolutional Neural Networks}
%
% Single address.
% ---------------
%\name{Author(s) Name(s)\thanks{Thanks to XYZ agency for funding.}}
%\address{Author Affiliation(s)}
%
% For example:
% ------------
%\address{School\\
%	Department\\
%	Address}
%
% Two addresses (uncomment and modify for two-address case).
% ----------------------------------------------------------
\twoauthors
 {Athanasios Tsiligkaridis} 
	{Boston University, ECE Dept.\\
	8 St. Mary's St., Boston, MA, USA\\
	atsili@bu.edu}
	{Theodoros Tsiligkaridis} 
	{MIT Lincoln Laboratory\\
	244 Wood St., Lexington, MA, USA\\
	ttsili@ll.mit.edu}
\begin{document}
%\ninept
%
\maketitle
\begin{abstract}
We present a novel efficient object detection and localization framework based on the probabilistic bisection algorithm.  A Convolutional Neural Network (CNN) is trained and used as a noisy oracle that provides answers to input query images.  The responses along with error probability estimates obtained from the CNN are used to update beliefs on the object location along each dimension.  We show that querying along each dimension achieves the same lower bound on localization error as the joint query design.  Finally, we compare our approach to the traditional sliding window technique on a real world face localization task and show speed improvements by at least an order of magnitude while maintaining accurate localization. \end{abstract} 
\begin{keywords}
Convolutional neural network, object localization, probabilistic bisection, active query, active learning.
\end{keywords}
\vspace{-3mm}

\section{Introduction}
\label{sec:I}
\vspace{-3mm}
Object detection and localization has received much attention in the literature as it is of high importance in various applications such as security, surveillance, and tracking.  Current methods can be slow and sometimes computationally prohibitive for resource-constrained platforms, e.g., mobile devices, UAVs.  By guiding the localization process using efficient search methods, dramatic speed ups in computation can be achieved as we show in this work.  

 %with a classifier,is trained using positive and negative samples and ultimately provides a score for how likely it is that an object exists in a specific section.  The maxima scores are then analyzed to locate the objects of interest.
 
Object localization is classically carried out using sliding window techniques, as discussed in \cite{Canevet:2014}, where a large set of image sections is formed by scanning an image; each section is then fed into an object classifier that provides a score on how likely it is to belong in a given subsection. Sliding window methods are widespread and well established but they require many classifier evaluations thus making such techniques computationally challenging.  

Feature-based classifiers represent image content using a feature set. Haar-wavelet features have been used for object localization tasks in \cite{Viola:2001, Viola:2004}.  Histogram of Oriented Gradients (HOG) features are popular for understanding shape and structure of content in an image and have been used in \cite{Dalal:2005} for human detection. Instead of using hand designed features, modern CNN methods perform feature learning given the raw pixels with enough training data and computing power. 

Object detection methods designed for a complete image characterization include regression methods and region-based methods. Regression methods, e.g., YOLO \cite{Redmon:2016}, use a single CNN to predict the object location and category but require annotated data which is expensive to accurately label and obtain. Region CNN (RCNN) methods \cite{Girshik:2014} consist of a region proposal stage \cite{Gu:2009, Uijlings:2013}, followed by a CNN to extract region features, a box regressor, and class-specific SVM classifiers for object labeling. Grid CNN \cite{Lu:2017} improves upon RCNNs by adding a grid CNN subnetwork to form translation-sensitive feature maps. We remark that the complexity of these methods is warranted for large-scale image analysis; but, in this work, we focus on efficient detection of a single object of interest in an image.

Recently, Bayesian methods have been proposed for efficient object localization. A multi-scale sliding window method followed by a non-adaptive query based Bayesian method was proposed in \cite{Rajan:2015} to locate faces. The authors in \cite{Sznitman:2010} detected faces using active testing with hierarchically partitioned query sets and multi-dimensional posterior updating. 

The Probabilistic Bisection Algorithm (PBA) is a target localization method that can be used to reduce uncertainty in searching for an object by sequentially asking questions on the existence of the object on slices of an image.  Since CNNs provide state-of-the-art feature learning and classification \cite{Qin:2018, Gu:2018}, both methods can be combined to provide a flexible way of learning features of objects and carrying out fast search.  This approach can also be extended to other applications, e.g., object search in video. 

The PBA was developed in \cite{Horstein:1963} and its optimality was proven in \cite{Jedynak:2012}.  When a single agent searches for a target by querying a noisy oracle, the PBA obtains an exponential convergence rate as shown in \cite{Waeber:2013}.  The PBA was extended to multiple targets in \cite{Rajan:2015} and to multiple agents in a centralized \cite{Tsiligkaridis:2014} and network \cite{Tsiligkaridis:2015, Tsiligkaridis:2016} setting.  The preceding works showed that all agents in a network converge to the correct consensus through local information sharing. Further, \cite{Tsiligkaridis:2017} considered a distributed version of the PBA and showed that all agents' beliefs concentrate at the correct location exponentially fast.  This work focuses on the single-agent setting where the PBA is used to guide a trained CNN classifier to localize a target in an image. 

%In this paper, we present a novel framework for object localization using the PBA and a CNN.  The PBA has the potential to allow for fast and computationally-efficient face localization.  The CNN serves as an oracle that provides noisy responses and probability estimates to input images; these outputs are used for bayesian updating of the posterior distributions where an error probability measurement is obtained and used.   We discuss the design of relevant training data from a celebrity dataset for our CNN and we show that independent querying along each dimension is equivalent to joint querying in terms of a lower bound on the mean square error.  Finally, we present simulations that verify the correctness and efficiency of our method in a face localization task. 

The contributions of this paper are as follows. First, we present a novel framework for object localization that combines the search speed of PBA and the feature learning capability and computational efficiency of CNNs for vision problems. The CNN serves as an oracle that provides noisy responses to input image queries. Second, we present a practical method for estimating the error probabilities corresponding to these queries which are used for Bayesian updating of the belief along each dimension. Third, we show that independent querying along each dimension is equivalent to joint querying in terms of a lower bound on the mean square error. Finally, we apply our technique to both a synthetic and a real world face localization task and show that we maintain accuracy and achieve a dramatic increase in computational efficiency over the traditional sliding window method. 

\vspace{-1mm}
\section{Algorithm Development} 
\label{sec:AD}
\vspace{-3mm}
We consider a target space $\mathcal{X}$ modeled by an input image $\bm{A}$ of size $N_1 \times N_2$, along with two probability distributions $p_{1,t}$ and $p_{2,t}$, one for each image dimension.  Each distribution represents the probability that a center of an object exists at a specific index bin.  At each iteration, each distribution is bisected and a query region is formed and fed into a CNN-based classifier.  The CNN returns a noisy response derived from its softmax layer and is used to update the posterior distributions. Due to the single object assumption inherent in the PBA, we seek to locate only a single object in an image; as future work, $n$-ary partitioning and querying may be used to locate multiple objects. 

It is also possible to use a joint query policy that maintains a joint distribution over $\mathcal{X}$.  This approach presents difficulties since the query set is non-trivial to form and updating the posterior is computationally expensive.  In contrast, our independent representation is beneficial since it allows for simple query region formation and updating.

Proposition \ref{prop:equiv} shows that independent querying on the probability distributions of each dimension is equivalent to the joint query policy in terms of mean square error. 
  
\begin{proposition} \label{prop:equiv}
Consider the independent query policy $\pi_I$ with equal allocation of queries per dimension, and the joint query policy $\pi_J$.  Both policies achieve the same lower bound on the localization error given by:
\begin{equation}
	K d e^{-2nC(\epsilon)/d} \leq \EE[\parallel X_n- X^*\parallel_2^2]
\end{equation} 
where $K$ is a constant, $d$ is the target space dimension, $n$ is the number of queries, $C(\epsilon)$ is the binary symmetric channel capacity with error probability $\epsilon$ \cite{Cover:2006} where all queries are assumed to have an error probability $\epsilon \in [0,\frac{1}{2})$, $X^*$ is the target location, and $X_n$ is the median estimate.
\end{proposition}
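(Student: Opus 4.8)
The plan is to establish a universal information-theoretic lower bound on the mean square error of \emph{any} estimator built from $n$ binary queries corrupted by a $\mathrm{BSC}(\epsilon)$, and then to observe that this bound does not depend on whether the queries are issued jointly ($\pi_J$) or split equally across the $d$ coordinates ($\pi_I$). All logarithms are taken in nats, so that $C(\epsilon)$ and the entropies below are in consistent units.

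First I would model each query. At step $i$ the policy selects a region $R_i \subseteq \mathcal{X}$ that is measurable with respect to the past responses $Y_1^{i-1}$, forms the clean bit $Z_i = \mathbf{1}\{X^* \in R_i\}$, and observes $Y_i$, the output of a $\mathrm{BSC}(\epsilon)$ on input $Z_i$, with the channel noise independent of $X^*$ and of the past given $Z_i$. Since $X^* \to Z_i \to Y_i$ is a Markov chain conditionally on $Y_1^{i-1}$, the data-processing inequality together with the capacity of the BSC gives $I(X^*; Y_i \mid Y_1^{i-1}) \le I(Z_i; Y_i \mid Y_1^{i-1}) \le C(\epsilon)$. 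The chain rule for mutual information then yields $I(X^*; Y_1^n) \le n\,C(\epsilon)$, hence the posterior differential entropy obeys $h(X^* \mid Y_1^n) \ge h(X^*) - n\,C(\epsilon)$; for a uniform prior on a cube of side $L$ this is $d\log L - n\,C(\epsilon)$.

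Next I would convert this into an MSE bound. For any estimator $X_n = X_n(Y_1^n)$ we have $\EE\|X_n - X^*\|_2^2 \ge \EE\|X^* - \EE[X^*\mid Y_1^n]\|_2^2 = \EE\big[\operatorname{tr}\operatorname{Cov}(X^*\mid Y_1^n)\big]$, since the conditional mean is the MMSE estimator (so the median estimate of the proposition can only do worse). Conditioned on $Y_1^n = y$, the Gaussian maximum-entropy inequality combined with AM--GM on the eigenvalues of the conditional covariance gives $\operatorname{tr}\operatorname{Cov}(X^*\mid Y_1^n = y) \ge \tfrac{d}{2\pi e}\,e^{2 h(X^*\mid Y_1^n = y)/d}$. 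Taking expectation over $y$ and applying Jensen's inequality to the convex map $t \mapsto e^{2t/d}$ moves the averaged conditional entropy inside, and combining with the previous step gives $\EE\|X_n - X^*\|_2^2 \ge \tfrac{d}{2\pi e}\,e^{2(h(X^*)-nC(\epsilon))/d} = K d\, e^{-2nC(\epsilon)/d}$ with $K = L^2/(2\pi e)$. This is exactly the claimed bound for $\pi_J$.

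Finally, for the independent policy $\pi_I$ with equal allocation I would run the identical argument one coordinate at a time: under a product prior the $X^*_j$ are independent, dimension $j$ receives $n/d$ queries and only responses along that dimension, so $I(X^*_j;\,\text{responses}_j) \le (n/d)\,C(\epsilon)$, the one-dimensional maximum-entropy bound gives $\operatorname{Var}(X^*_j \mid \cdot) \ge \tfrac{L^2}{2\pi e}\,e^{-2nC(\epsilon)/d}$, and summing over the $d$ coordinates reproduces exactly $K d\, e^{-2nC(\epsilon)/d}$ — the same bound as for $\pi_J$, with the factor $d$ now arising from the sum over coordinates rather than from the trace/AM--GM step. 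This is precisely where equal allocation is needed. The main obstacle I anticipate is the conditioning bookkeeping: carefully justifying $I(X^*; Y_i \mid Y_1^{i-1}) \le C(\epsilon)$ (which requires that, given the past, the query channel is still a memoryless $\mathrm{BSC}(\epsilon)$) and the Jensen step that passes from $h(X^* \mid Y_1^n = y)$ to $h(X^* \mid Y_1^n)$; the remainder is the standard Shannon lower bound for quadratic distortion.
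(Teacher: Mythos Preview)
Your argument is correct and arrives at the stated bound, but it proceeds very differently from the paper. The paper does not derive anything from first principles: it simply invokes an existing result (Theorem~4 of \cite{Tsiligkaridis:2014}) once in $d$ dimensions to get $Kde^{-2nC(\epsilon)/d}\le \EE[\|X_n^{\pi_J}-X^*\|_2^2]$, and then invokes the same theorem in one dimension for each coordinate under $\pi_I$, obtaining $Ke^{-2n_iC(\epsilon)}\le \EE[(X_{n,i}^{\pi_I}-X_i^*)^2]$, setting $n_i\approx n/d$, and summing. What you have written is essentially a self-contained proof of the cited theorem: the chain-rule/data-processing bound $I(X^*;Y_1^n)\le nC(\epsilon)$ followed by the Shannon--Gaussian maximum-entropy lower bound on the conditional trace covariance. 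Your route is more informative and makes explicit where the exponent $2nC(\epsilon)/d$ comes from, and it pins down the constant as $K=L^2/(2\pi e)$ under a uniform prior on $[0,L]^d$; the paper's route is a two-line black-box application of a known bound. Both are fine, and your per-coordinate reduction for $\pi_I$ matches the paper's exactly.
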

\begin{proof}
Theorem 4 from \cite{Tsiligkaridis:2014} yields the lower bound on the joint policy given by $	K d e^{-2nC(\epsilon)/d} \leq \EE[\parallel X_n^{\pi_J}- X^*\parallel_2^2]
$. Applying this theorem again independently in each dimension, we have $ K e^{-2n_i C(\epsilon)} \leq \EE[(X_{n,i}^{\pi_I}-X_i^*)^2]$. Using $n_i \approx n/d$, we further obtain:
\begin{align*}
	\EE&[\parallel X_n^{\pi_I}- X^*\parallel_2^2] = \sum_{i=1}^d \EE[(X_{n,i}^{\pi_I}-X_i^*)^2] \\
							       &\geq \sum_{i=1}^d K e^{-2n_i C(\epsilon)} \approx K d e^{-2nC(\epsilon)/d} 
\end{align*}
\end{proof}

Our algorithm will use a CNN as a noisy oracle that will accept query regions as input.  These query regions will be sections of the input image $\bm{A}$ based on the query point at a specific iteration $t$.  For example, if we consider the column ($x$) direction, we first bisect the posterior $p_{2,t}$ and obtain the query point $X_{2,t}$.  With this, two query regions $S_{1,t} = \{ (i,j) | i \in [1,N_1], j \in [1,X_{2,t}] \}$ and $S_{2,t} = \{ (i,j) | i \in [1,N_1], j \in [X_{2,t}+1,N_2] \}$ are formed and one is randomly selected with probability $\frac{1}{2}$.  Next, we want to see if there is an object of interest in this region to accordingly update the posterior along the $x$ direction.  

We note that our CNN only accepts inputs of a standard user-defined size which we will define as being a square. The input query region can be long or narrow based on the initial image dimensions and the query point. Directly resizing this region into a square and using it as input can be problematic since the resizing operation can distort content in the section and can cause incorrect CNN responses.  To avoid this, we partition the query region into $Q_t$ square blocks based on its minimum dimension.  We then resize each block to the set input size; resizing already square blocks will not cause content distortion.  Each block $B_{i,t}, i = 1,...,Q_t$, is fed into the CNN and we obtain a set of responses $\mathcal{Y}_t = \{y_{i,t}|y_{i,t} \in \{ 0,1 \}, i = 1,...,Q_t \}$ and a matrix of softmax outputs $\bm{V}_t$ where each row is the CNN's softmax output for block $B_{i,t}$: $v_{i,t} = [p_{\{ \text{object},i,t \}},p_{\{ \text{no object},i,t \}}]$.  The final decision is obtained as: $y_t = \bigcup_{y \in \mathcal{Y}_t} y$.

The matrix $\bm{V}_t$ and the response set $\mathcal{Y}_t$ are used to update the posterior.  If the CNN provides high confidence on object existence in the section, we want to drastically update the posterior; but, if the CNN provides low confidence, then the distribution should not be changed heavily.  For this, we will define a parameter $\epsilon$ that parameterizes the likelihood. 

Let $T_F = \{ i | p_{\{ \text{object},i,t \}} > p_{\{ \text{no object},i,t \}} \}$ be the index set of blocks from the query region with the object of interest.  For the $i$th block that is formed, the error probability is estimated as $\epsilon_i = 1 - \text{max} \{ p_{\{ \text{object},i,t \}},p_{\{ \text{no object},i,t \}} \}$ where $\epsilon_i \in [0,\frac{1}{2})$.  The total error probability $\epsilon$ is obtained as:

\begin{equation} 
\label{eq:epsilon}
  \epsilon =
  \begin{cases}
                                   		\min_{i \in T_F} \epsilon_i & \text{if $T_F \neq \varnothing$} \\
                                   		\min_{i} \epsilon_i & \text{if $T_F = \varnothing$} 
  \end{cases}
\end{equation}

The likelihood function for the PBA is:
\begin{equation} \label{eq:li}
	f(u,y_t) = f_1 (y_t) I (u \in S_{k,t}) + f_0 (y_t) I (u \notin S_{k,t}) 
\end{equation}
where $u$ denotes the $x$ or $y$ index bin, $k \in \{ 1,2 \}$ represents either dimension, and $f_1(y_t)=(1-\epsilon)^{I(y_t=1)} \epsilon^{I(y_t=0)}, f_0(y_t)=1-f_1(y_t)$. Equation \ref{eq:li} is used in updating the posteriors. Over many iterations, the posterior distributions concentrate at the coordinates of the center of an object of interest.

Algorithm \ref{alg:alg1} summarizes our complete object localization method.

\begin{algorithm}
\caption{ Hybrid PBA + CNN Localization Algorithm }
\label{alg:alg1}
\begin{algorithmic}[1]
\STATE \textbf{Input:}  {Image $\bm{A}$ of size $N_1 \times N_2$}
\STATE \textbf{Output:} {$(X_{1}^*,X_{2}^*) =$ object center coordinates}

    \STATE Initialize: $p_{k,0} [u] = \frac{1}{N_i} \forall u \in \{1,...,N_k \}, k \in \{ 1,2 \}$.
   
    \REPEAT \STATE For $k \in \{1,2 \}$; $t =$ current iteration, Do: \\
        				\STATE (a) Bisect $p_{k,t} [u]$ to obtain the query point $X_{k,t}$ s.t. $\sum_{u=1}^{X_{k,t}-1} p_{k,t}[u]  \leq \frac{1}{2}, \sum_{u=1}^{X_{k,t}} p_{k,t}[u]  > \frac{1}{2}$\\
				and randomly select a query region with equal probability: $S_{k,1}$ or $S_{k,2}$.\\
			  \STATE (b) Break up query region into $Q_t$ square blocks, input each block $B_{i,t}$ into CNN, and obtain response $y_t = \cup_{y \in \mathcal{Y}_t} y$ \\
				\STATE (c) Obtain $\epsilon$ from Equation \ref{eq:epsilon} and update distributions: \\$p_{k,t+1}[u] = p_{k,t}[u] \cdot 2 f(u,y_t)$
				
    \UNTIL {convergence}
\end{algorithmic}
\end{algorithm}

\vspace{-2mm}
\section{Experimental Results}
\label{sec:ER}
\vspace{-3mm}

\subsection{Synthetic Dataset - Star In Noise Localization}

We first apply Algorithm 1 to a synthetic example of localizing a star in the presence of salt and pepper noise.  For the oracle, we train a small CNN with two convolution and max pooling layers using $1000$ training and $800$ testing examples, all of which had the same noise level as that depicted in the rightmost image in Figure \ref{fig:star_test_images}.  The test results in Figure \ref{fig:star_test_images} show that our proposed method achieves identical localization performance as the sliding window method.

\begin{figure}[!ht]
	\includegraphics[width=0.4\textwidth, height=0.135\textwidth]{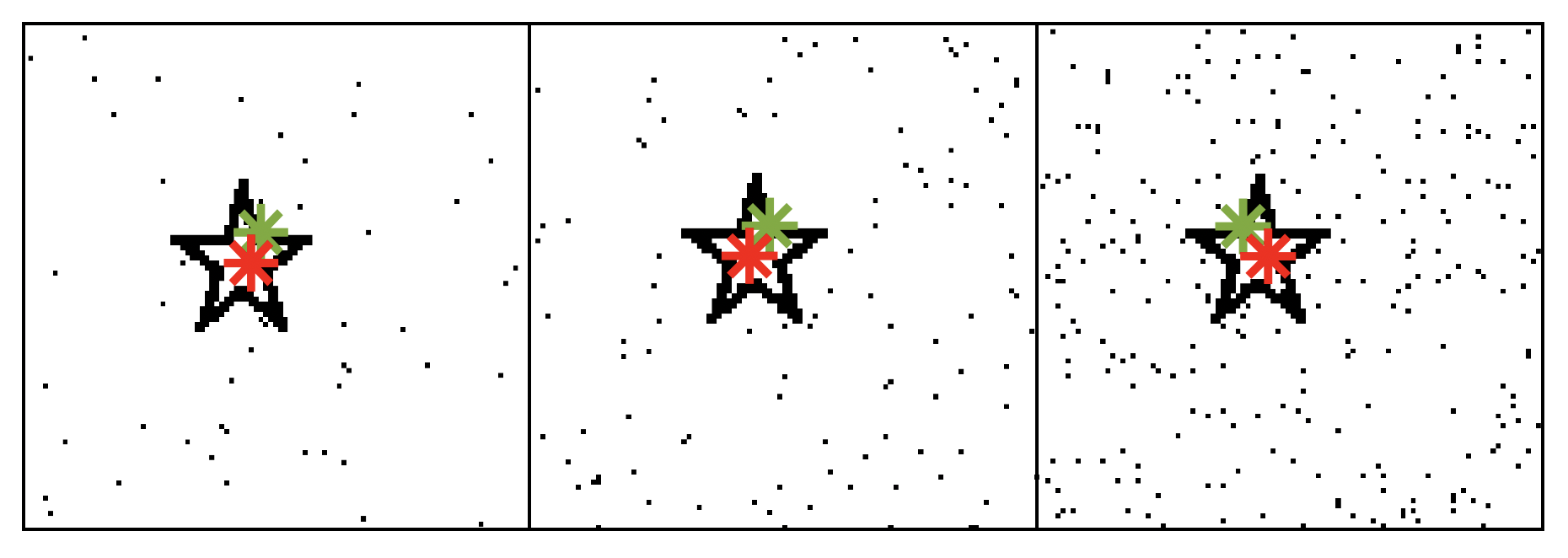}
	\centering
	\vspace{-4mm}
	\caption{\small Object localization using sliding window approach (red) and our proposed Hybrid PBA + CNN method (green). The markers represent the centers obtained from both respective methods.  Our method accurately finds the star target.}
	\label{fig:star_test_images}
\end{figure}

\subsection{Real Dataset - Celebrity Face Localization}

We now want to apply Algorithm 1 to a face localization task.  We first train a face identifying CNN as our oracle using the CelebA dataset developed in \cite{Liu:2015}.  This dataset contains two types of celebrity images: one with both face and background (e.g, see entire image in Figure \ref{fig:forming_data}), and the other solely with face (e.g., see columns $2$ and $4$ in Figure \ref{fig:training_data}). We do not directly train the classifier using the images with face and background since the background must not be dominant and should be minimized to allow for emphasis of the face content. Thus, we partition all face and background images based on the minimum dimensions of each image as displayed in Figure \ref{fig:forming_data}. For this process, we first find the minimum dimension $d_{min}^{1}$ of the entire image and form a square of size $d_{min}^{1} \times d_{min}^{1}$ that contains a small amount of background and a face.  Then, we find the minimum dimension $d_{min}^{2}$ of the remaining portion and form a square of size $d_{min}^{2} \times d_{min}^{2}$ that contains solely background.  With this, we obtain a set of faces (with minor background) and backgrounds that we use along with the images with just faces to form a training set.   Example training images are displayed in Figure \ref{fig:training_data} where columns $1$ and $3$ contain images with face and background, columns $2$ and $4$ contain face images, and columns $5$ and $6$ contain background images such as wallpapers, shoulders, hands, etc. In addition, we also split the face images in columns $2$ and $4$ and form partial face images that we also include in the training set.  This is done to allow the CNN to understand cases where the query region could possibly contain a fraction of a face. Ultimately, we have $8000$ training and $2000$ testing images of size $100 \times 100$ which are used to form our CNN; a three layer architecture is used consisting of convolution and max pooling layers. %layer contains $128$, $64$, and $32$ filters, respectively, each of size $3 \times 3$ interlaced with max-pooling layers. 

\begin{figure}[!ht]
	\includegraphics[width=0.285\textwidth, height=0.275\textwidth]{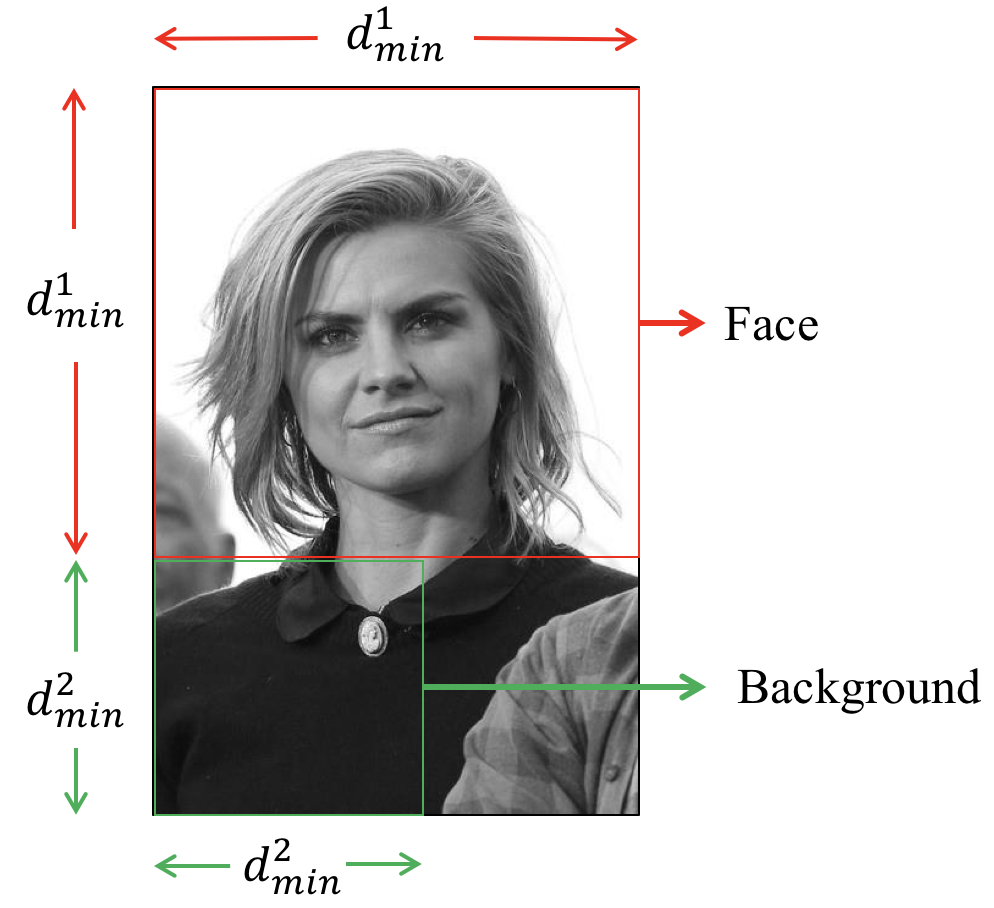}
	\centering
	\vspace{-3mm}
	\caption{\small Methodology for forming face and background images from the CelebA images. The presented image is partitioned into an image with more face and less background and an image with only background.}
	\label{fig:forming_data}
\end{figure}

\begin{figure}[!ht]
	\includegraphics[width=0.465\textwidth, height=0.42\textwidth]{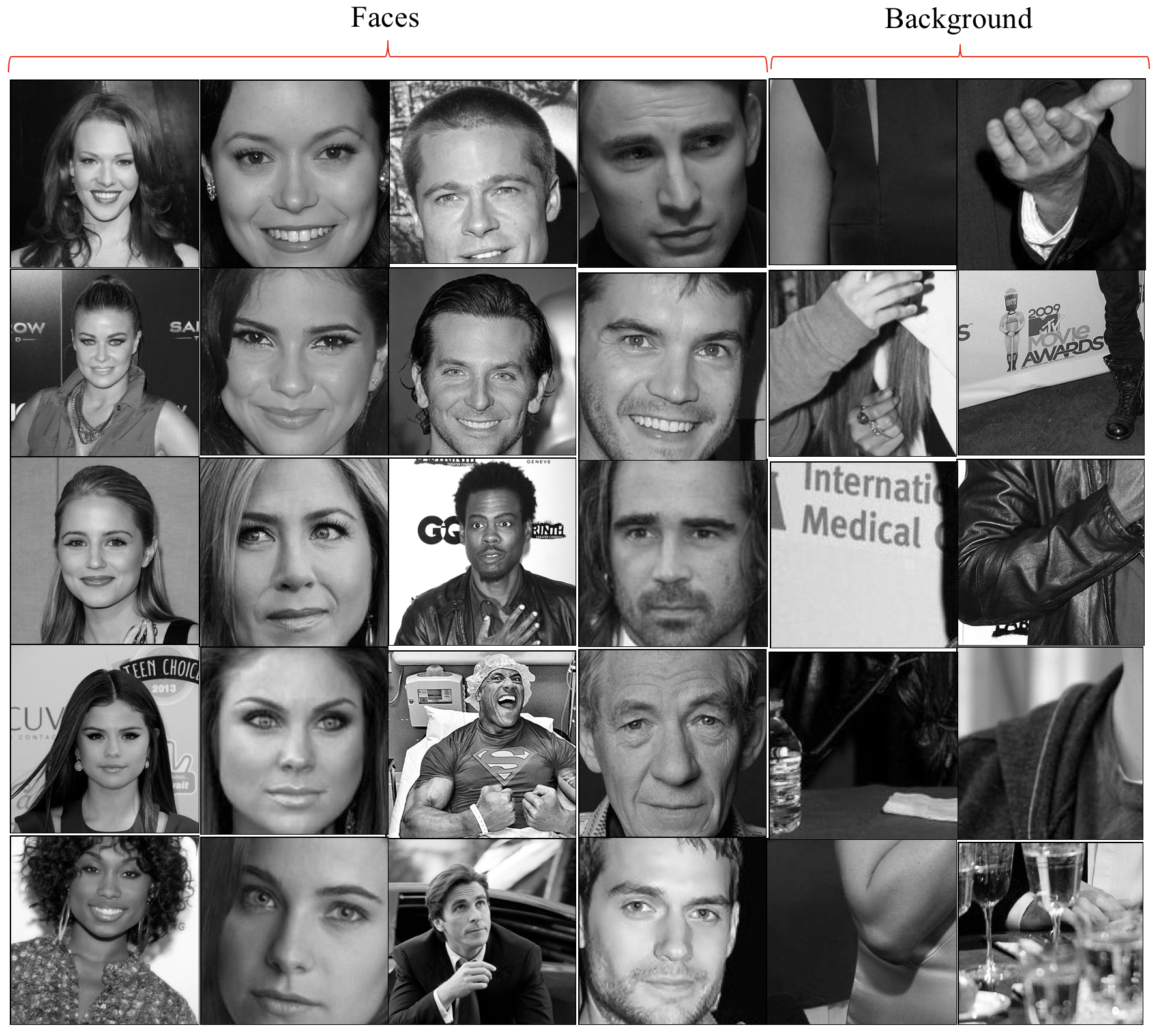}
	\centering
	\vspace{-5mm}
	\caption{\small Example training images used for training our CNN.}
	\label{fig:training_data}
\end{figure}

Next, we test Algorithm 1 on a set of celebrity images (not from the CelebA dataset) and estimate face center coordinates.  After we obtain the center coordinates, we form a bounding box by using our trained CNN with increasingly larger boxes around the obtained center.  Our stopping condition is when the probability of a face obtained from the softmax layer exceeds $0.90$. Figure \ref{fig:test_images} shows our test results where the green and red bounding boxes are obtained from our proposed method and the sliding window approach, respectively.  The images are labeled as (A) - (F) and in all cases we obtain correct face centers and bounding boxes, even though the faces are of different sizes and positions.  

%\begin{figure}[!ht]
%	\includegraphics[width=0.215\textwidth, height=0.3\textwidth]{distributions_center.png}
%	\centering
%	\vspace{-5mm}
%	\caption{Visualization of probability distributions along each dimension.  Distributions are in red and their median locations represent the face center (yellow circle) coordinates.}
%	\label{fig:distributions}
%\end{figure}

\begin{figure}[!ht]
	\includegraphics[width=0.5\textwidth, height=0.575\textwidth]{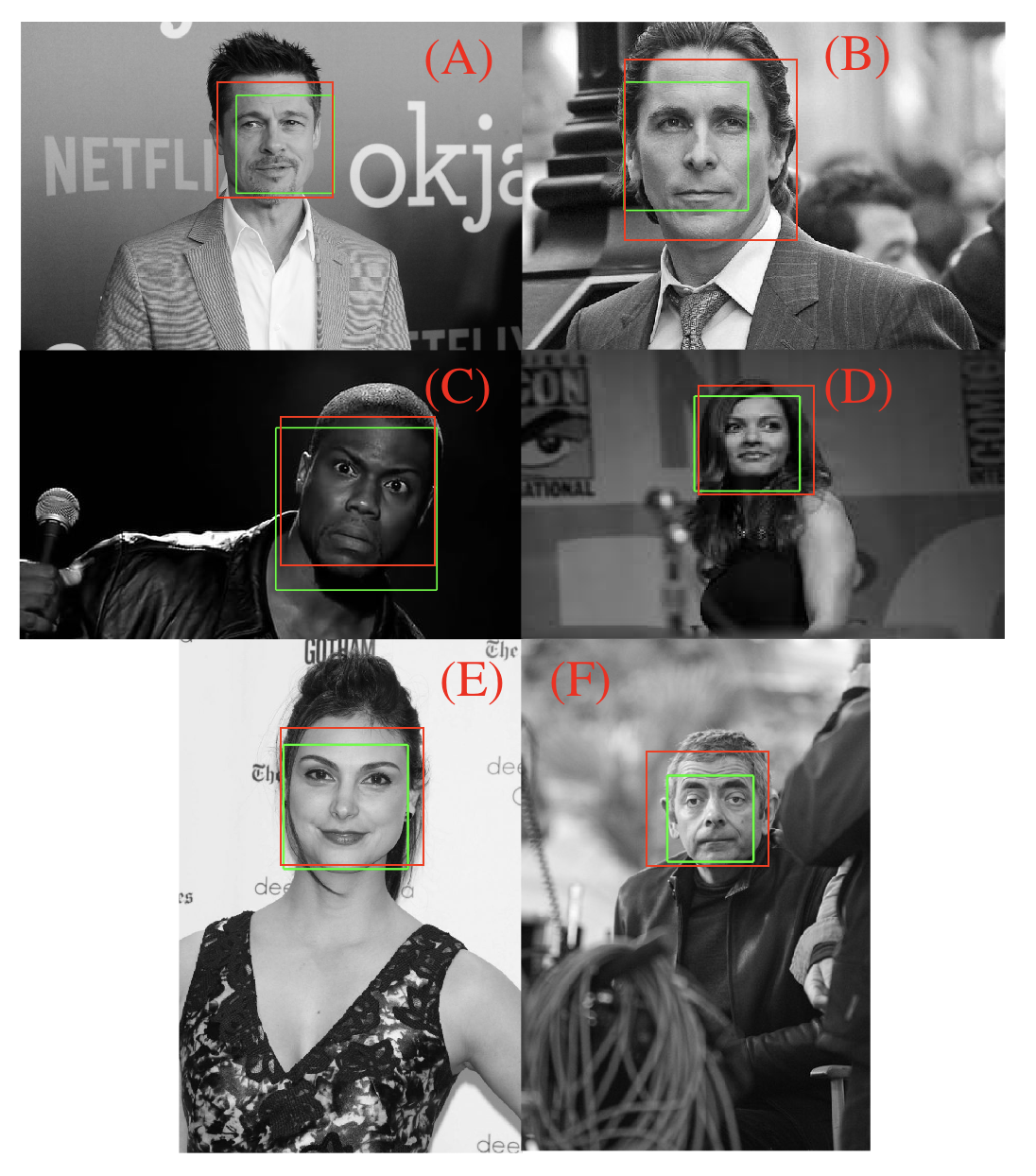}
	\centering
	\vspace{-8mm}
	\caption{\small Sample test images and detected faces using the sliding window method (red) and our proposed Hybrid PBA + CNN Localization Algorithm (green).}
	\label{fig:test_images}
\end{figure}

Our method allows for fast and efficient localization as compared to traditional sliding window techniques.  Table \ref{table:cnn} lists the amount of CNN calls needed to acquire the center face coordinates for images (A) - (F); these values are obtained by averaging CNN call counts over $20$ trials per image.  For a comparison, we consider the number of CNN calls that are required when using a sliding window technique where an image is scanned using overlapping windows with a fixed window shift and various window sizes.  Since the images are of different sizes, different amounts of CNN calls will be required for a given window size and shift; on average, for a selected window size, our method is at least an order of magnitude faster than the sliding window approach.

\begin{table}[h!]

\centering
 \begin{tabular}{||P{0.75cm}| P{0.85cm}| P{1.6cm}| P{1.6cm}| P{1.6cm}||} 
 \hline
 {\small Image Label} & {\small Hybrid PBA + CNN} & {\small Sliding Window: $w = 100$px, $s = 25$px} & {\small Sliding Window: $w = 150$px, $s = 25$px}  & {\small Sliding Window: $w = 200$px, $s = 25$px}  \\ [0.5ex] 
 \hline\hline
 A & 41 & 912 & 792 & 680 \\ 
 \hline
 B & 34 & 375 & 299 & 231  \\
 \hline
 C & 34 & 703 & 595 & 495  \\
 \hline
 D & 26 & 540 & 448 & 364   \\
 \hline
 E & 36 & 828 & 714 & 608  \\   
 \hline
 F & 46 & 260 & 198 & 144  \\ [1ex] 
 \hline
\end{tabular}

%\caption{Computational complexity comparisons for the Hybrid PBA + CNN method with sliding window approaches.  The number of classifier calls for each respective method is displayed.  Sliding window approaches of $\frac{1}{2}$, $\frac{1}{3}$, and $\frac{1}{4}$ shifts require on average $1.85$x, $3.80$x, and $6.81$x, respectively, as many CNN calls as our efficient method does.}
\caption{\small Comparison of the classifier call counts for the Hybrid PBA + CNN method along with the traditional sliding window method for a fixed window shift and various window sizes. Our method is 43$\times$ faster, on average over all the selected window sizes and tested images, according to the number of CNN calls.}
\label{table:cnn}
\end{table}
%\end{comment}

\vspace{-3mm}
\section{Conclusion}
\vspace{-3mm}

In this paper, we proposed a new and efficient algorithm for object localization based on combining the probabilistic bisection algorithm with convolutional neural network classification. In addition, we showed that working and updating beliefs along each direction in comparison to the more complex joint case is equivalent in terms of a lower bound on the mean square error.  We also presented experiments on a face localization task and showed significant speed improvements over traditional methods. Future work may extend this work to localize multiple objects in images.

%In this paper, we proposed a new and efficient algorithm for object localization based on combining the probabilistic bisection algorithm with convolutional neural network classification. In addition, we showed that working and updating beliefs along each direction in comparison to the more complex joint case is equivalent in terms of a lower bound on the mean square error.  We also presented experimental results where we successfully located faces in a diverse set of images and showed significant reductions in number of classifier calls as opposed to traditional sliding window methods. Future work may be pursued to localize multiple faces in an image or to efficiently track faces in a video.  

%We also presented experiments on a face localization task and showed significant speed improvements over traditional naive methods.

\clearpage
% -------------------------------------------------------------------------
\bibliographystyle{IEEEbib}
\bibliography{paper_v2}

\end{document}